\documentclass[letterpaper, 9 pt, journal, twoside]{IEEEtran} 
                                                          
\IEEEoverridecommandlockouts                              

\usepackage[utf8]{inputenc}
\usepackage[cmex10]{amsmath}
\usepackage{amsfonts}
\usepackage{amssymb}
\usepackage{cite}

\newtheorem{theorem}{\bf Theorem}

\newtheorem{corollary}{\bf Corollary}
\newtheorem{definition}{\bf Definition}
\newtheorem{lemma}{\bf Lemma}
\newtheorem{invariant}{\bf Invariant}

\usepackage{psfrag}
\usepackage[dvips]{graphicx}
\graphicspath{{figs/}}

\usepackage[tight]{subfigure}

\title{Capturing an Omnidirectional Evader in Convex Environments using a Differential Drive Robot}

\author{Ubaldo Ruiz$^1$ and Volkan Isler$^2$ 
	\thanks{$^1$Ubaldo Ruiz is a CONACYT Research Fellow. He is with the Department of Computer Science, CICESE, Baja California, Mexico {\tt\small uruiz@cicese.mx}.}
	\thanks{$^2$Volkan Isler is with the Department of Computer Science and Engineering, University of Minnesota, Minneapolis, MN, USA {\tt\small isler@cs.umn.edu}}
}


\begin{document}

\maketitle


\begin{abstract}
We study the problem of capturing an Omnidirectional
Evader in convex environments using a Differential
Drive Robot (DDR). The DDR wins the game if at any time instant it captures (collides with) the evader. 
The evader wins if it can avoid capture forever. Both players are unit disks with the same maximum (bounded) speed, but the DDR can
only change its motion direction at a bounded rate.  We show that despite this limitation, the DDR can capture the evader.
\end{abstract}

\begin{IEEEkeywords}
Motion and Path Planning,
Surveillance Systems,
Nonholonomic Motion Planning.
\end{IEEEkeywords}
	
\section{INTRODUCTION}
\label{sec:introduction}
We introduce a novel pursuit-evasion game closely related to mobile robotics applications.
In the literature, numerous pursuit-evasion games have been studied~\cite{CHUNG-11}. For example, one or more pursuers could be given the task of
finding an evader \cite{GUIBAS-99,ISLER-05,TOVAR-08} in an
environment. Another related problem is to maintain visibility
of a moving evader \cite{LAVALLE-97,HHG-02,JUNG-02,BHATTACHARYA-10, MURRIETA-11}. Alternatively, the pursuer might try to capture the evader by moving to a contact configuration or getting closer than a given distance~\cite{ISAACS-65,BHADAURIA-12,RUIZ-13,NOORI-14, STIFFLER-14}.

The kinematic problem of capturing an omnidirectional evader using a
Differential Drive Robot (DDR) in an obstacle-free environment was
studied in \cite{RUIZ-13}. In that work, it was assumed that the DDR
is faster than the evader, and the game ends when the distance between the
DDR and the evader is smaller than a critical value $l$. The DDR wants
to minimize the capture time while the evader wants to maximize
it. The main contributions of that work were computing time-optimal motion strategies
for each player using differential game theory \cite{ISAACS-65} and finding the conditions defining the winner. In contrast to \cite{RUIZ-13}, in this work we consider that the game takes place in a bounded environment and the players have the same maximum velocity. This setup requires a new solution and methodology.

Our game is a variant of the lion-and-man game in which a lion tries
to capture a man with equal maximum speed. This game has received
significant attention in robotics~\cite{CHUNG-11}. 
However, in most previous work on the lion-and-man game, the lion is assumed to be omnidirectional. This assumption is not true for most robots. Therefore, in this paper, we study a variant of the original lion-and-man game in which the pursuer is a differential drive.

In the original version of the game, the
lion and man are in a circular arena (see Fig. \ref{fig:figlionman}). They have the same maximum speed
and can observe each other at all times. The lion can get arbitrarily close to
the man using the following strategy first 
described in~\cite{LITTLEWOOD-53}: at the beginning of the game, the lion goes to the center  $C$ of the arena.
Afterwards, let $M'$ be the position of the man when it is the lion's turn. The lion moves on to the radius $CM'$. Among all points on $CM'$ within its step-size, the lion chooses the point $L'$ that is closest to $M'$. By using elementary trigonometry, it can be shown that the lion captures the man in $O(r^2)$ steps when they move in turns. Here, $r$ is the radius of the arena. 

Now imagine that both players are unit disks and the pursuer is a Differential Drive Robot (DDR). It can not directly follow the strategy described above. In particular, since it must spend time to turn, it can not reach $L'$. Can the pursuer capture the man? In this paper, we present a novel strategy and show that the DDR lion can still capture the man. 

We start by mapping the environment to the evader's configuration space. In the new environment both players are represented as points, and the configurations where the DDR collides with the evader are characterized by a capture disk of radius $r_c = 2$ centered at the DDR's position. This mapping allows us to focus on the differential constraints and ignore collisions with the boundary due to the shape of the players. Note that since the players have the same shape, the reachable part of the workspace is the same for both players. 
We propose a pursuit strategy to solve the game and prove that one DDR pursuer can capture (collide) the evader in upper bounded time. 

\begin{figure}[t]
\centering
\includegraphics[scale=0.35]{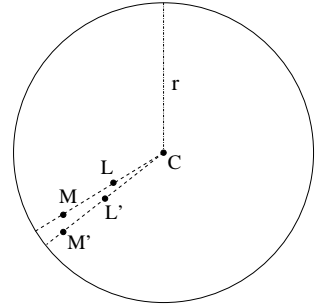}
\caption{The man moves from point $M$ to point $M'$. The lion at $L$ moves on to the radius $CM'$ and reaches point $L'$. \label{fig:figlionman}}
\end{figure}

\subsection{Related Work}
The lion and man game is well studied \cite{LITTLEWOOD-53, ALONSO-92,SGALL-01}. In \cite{LITTLEWOOD-53}, Littlewood shows that the lion can not reduce the distance to zero in the continuous time formulation of the problem. Alonso et al. \cite{ALONSO-92} showed that the lion captures the man in time $O(\frac{r}{s}\log\frac{r}{c})$, where $r$ is the radius of the circular arena, $c$ is the capture distance and $s$ is the maximum speed of the players. In \cite{SGALL-01}, Sgall studies the discrete time version of the problem in the positive quadrant. He showed under which initial conditions  the lion can capture the man.
Recently, it has been shown that a single lion can capture the man in simply-connected polygons~\cite{ISLER-05}, and three lions suffice in polygons with holes~\cite{BHADAURIA-12}.
In this work, we take a step toward modeling more realistic robotics applications and study the lion and man game where the pursuer is a  DDR that can only change its motion direction at a bounded rate that is inversely proportional to its translational speed~\cite{RUIZ-13}. Using a novel strategy, we study conditions under which the DDR lion can capture the man in any convex environment (see Fig.~\ref{fig:figworkspace}).

\begin{figure}[t]
\centering
\includegraphics[scale=0.35]{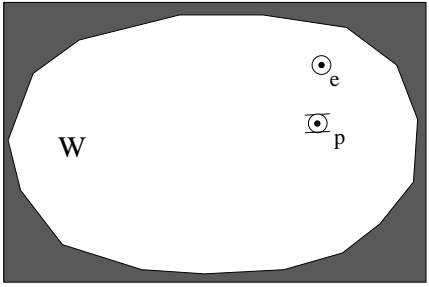}
\caption{The game takes place in a convex environment $W$. The DDR at position $p$ and the evader at position $e$ are unit disks. The DDR wins the game if it captures (collides with) the evader. The evader wins if it avoids capture forever. \label{fig:figworkspace}}
\end{figure}

\section{PROBLEM FORMULATION}
\label{sec:problemformulation}

A Differential Drive Robot pursuer (DDR) and an omnidirectional evader move in a convex environment $W \in \mathbb{R}^2$ (see Fig. \ref{fig:figworkspace}). The DDR wants to capture the evader, and the evader wants to avoid capture. The capture condition is satisfied when the DDR is in collision with the evader. The DDR wins the game if it captures the evader in finite time. The evader wins if it avoids capture forever. Both players are unit disks and have the same maximum bounded speed $v^{\max}$. The DDR can only change its motion direction at a bounded rate that is inversely proportional to its translational speed \cite{RUIZ-13}. In this work, we consider a purely kinematic problem, and neglect any effects due to dynamic constraints (e.g., acceleration bounds).  The motions of the players are made in turns, with the evader moving first. The duration of each turn is chosen to be $1/v^{\max}$ which is the time it takes to travel a unit distance. Our result holds for any non-zero turn duration with the appropriate scaling of the number of steps. We assume a complete information setup, i.e. each player knows the location and orientation of the other player at all times.

\begin{figure}[t]
\centering
\subfigure[Model]{
\label{fig:modelDDR}
\includegraphics[scale=0.42]{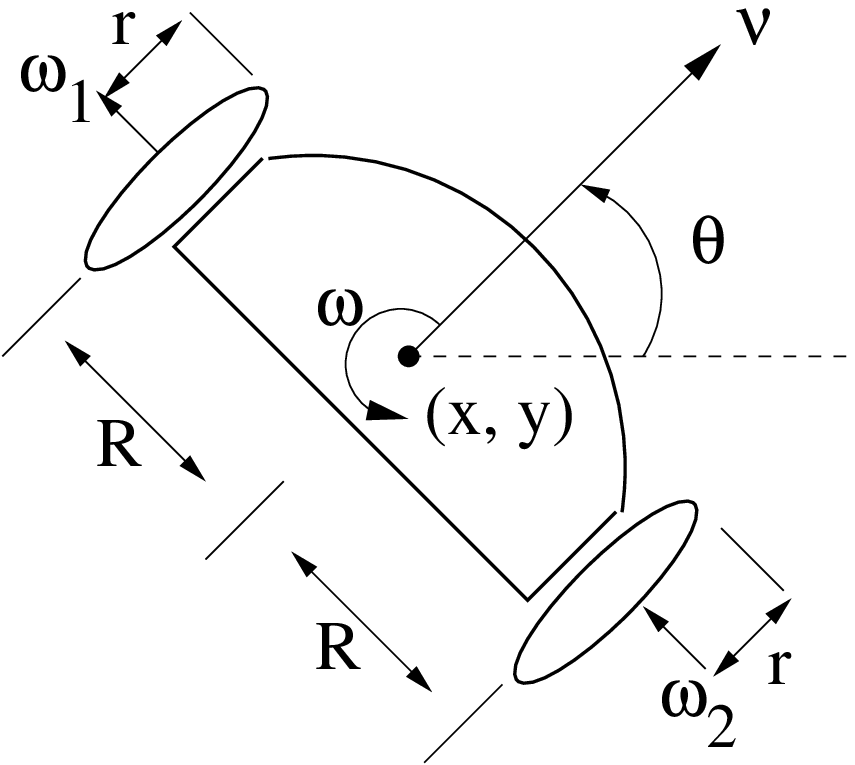}
}
\subfigure[Control space] {
\label{fig:perfilddr}
\includegraphics[scale=0.46]{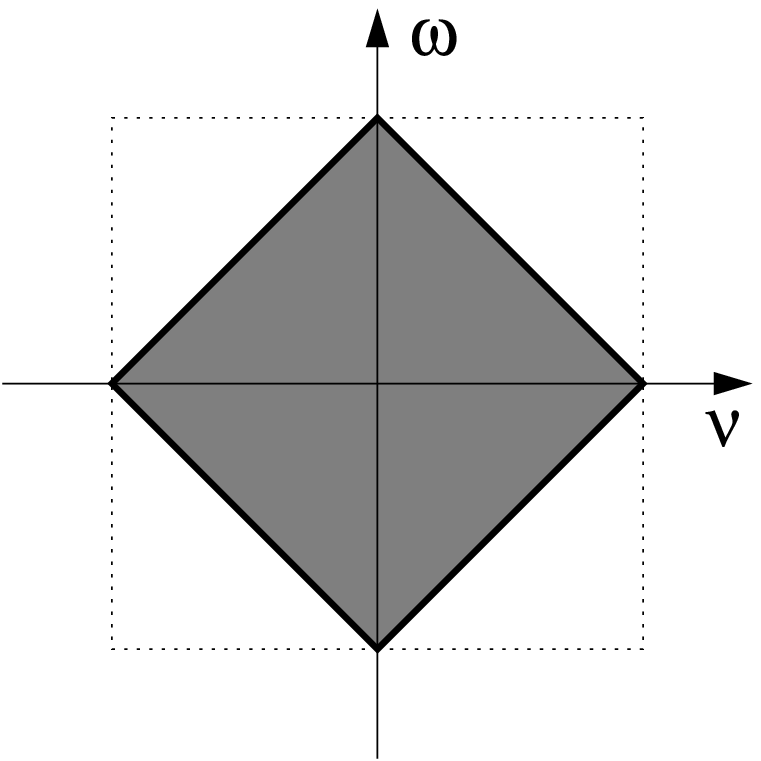}
}
\caption{Differential Drive Robot Model}
\end{figure}

\section{PRELIMINARIES}
\label{sec:preliminaries}

In this section, we present the concepts and definitions used throughout the paper. 

\subsection{Model}

The kinematic model for a DDR \cite{RUIZ-13} is given by 
\begin{equation}
\label{eq:ddr}
\begin{split}
\dot x_p = v \cos \theta_p,\:\:
\dot y_p = v \sin \theta_p,\:\:
\dot \theta_p = \omega
\end{split}
\end{equation}
where $v$ and $\omega$ are the translational and angular velocities of the DDR. In practice, we control the individual velocities of the wheels. Therefore, we have that

\begin{equation}
\label{eq:controlsdef}
\begin{split}
v = \frac{r \left( \omega_1+ \omega_2 \right)}{2},\:\:\:\:\: \omega = \frac{r \left( \omega_2 - \omega_1 \right)}{2R}
\end{split}
\end{equation}
where $r$ is the radius of the wheels, and $\omega_1$ and $\omega_2$ are their angular velocities. $R$ is the distance between the center of the robot and the wheel's location (see Fig. \ref{fig:modelDDR}). For a DDR, assuming  $v^{\max}>0$, we have that

\begin{equation}
\label{eq:vwddr}
|\dot \theta| = |\omega| \leq \frac{1}{R}(v^{\max}- |v|)
\end{equation}
The angular velocity is inversely proportional to the translation velocity (see Fig. \ref{fig:perfilddr}). We assume that the values of the translational velocity $v$  and the angular velocity $w$ can be chosen directly as long as they satisfy Eq. (\ref{eq:vwddr}).

\subsection{Playing space}

As  mentioned earlier, we map the convex environment $W$ to the evader's configuration space simply by removing all points within unit distance from the boundary.  
We denote this new environment as $Q$.  From now on, we represent the players as  points in $Q$.  Now we identify configurations where collisions between the DDR and the evader are possible in $W$. In Fig. \ref{fig:figcapturegion}, we observe that collisions in $W$ occur for the evader's positions that are at distance at most 2 to the DDR's center thus we can characterize those configurations in $Q$ using a circle of radius $r_c=2$ centered at $p$. We denote that circle as the capture region.
 
 Hereafter $p \in Q$ represents the position of the DDR, and $\theta_p$ denotes its orientation (heading) with respect to a line $L_G$ which will be explained shortly. The location of the evader is denoted by $e\in Q$. We assume that both players have the same maximum speed $v^{\max}=1$ and the DDR's wheels have radius one thus $r=1$. Each player moves during a time-step $\Delta t = 1$. The length of the shortest path between two points $a,b\in Q$ is denoted by $d(a,b)$. A chord, or a diagonal of $Q$ is a line segment joining two non-consecutive vertices of $Q$. The diameter of $Q$ is   the length of the longest chord of $Q$ and denoted by $diam(Q)$. The boundary of $Q$ is represented as $\partial Q$.

\begin{figure}[t]
\centering
\includegraphics[scale=0.62]{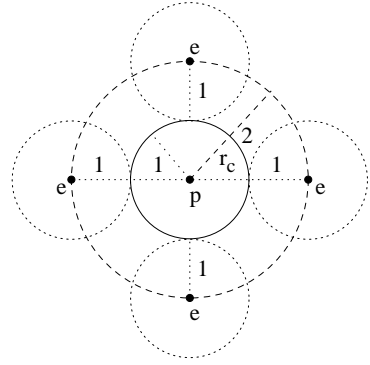}
\caption{The dashed circle with radius $r_c$ centered at the DDR's position $p$ characterizes the configurations where the DDR collides with the evader. Note that since the players are represented by unit disks we have that $r_c=2$. \label{fig:figcapturegion}}
\end{figure}

\subsection{Notion of guarding a line segment}

Next, we define a fundamental concept behind the notion of guarding a line segment.

\begin{definition}[Projection]
Let $L_G\in Q$ be a line segment dividing $Q$ into two subregions $Q_1$ and $Q_2$, and $e\in Q_2$. The {\em projection} of $e$ on $L_G$ is the closest point $e_{\pi}\in L_G$ to $e$.
\end{definition}

In this paper, we take advantage of the fact that the DDR has a non-zero capture radius. In Lemma \ref{lm:relocation}, we prove that the DDR can be located at a distance $d(p,e_\pi) = \frac{1}{2}$ on $L_G$, and it captures the evader if it tries to cross $L_G$.
	
\begin{definition}[Guarding a line]
The line segment $L_G$ is guarded if the DDR can prevent the evader from crossing it. That is, the evader is captured if it crosses $L_G$.
\end{definition}

In Lemma~\ref{lm:relocation}, we will show that the DDR can guard $L_G$ by maintaining the following invariants at the beginning of each turn of the evader.

\begin{invariant}
\label{inv1}
The heading of the DDR is {\em parallel} to $L_G$.
\end{invariant}

\begin{invariant}
\label{inv2} 
The DDR is located at a point $p\in L_G$ such that $d(p,e_\pi)\ = \frac{1}{2}$.
\end{invariant}

In Lemma \ref{lm:initial}, we prove that Invariants \ref{inv1} and \ref{inv2} can be established by the DDR in finite time.  Once the invariants have been established in the game, the DDR has to ensure that after performing its motions both invariants are restored before its turn ends.
\subsection{Local reference frame}
Suppose the DDR is guarding $L_G$. We define a local reference frame $R_L$ with its origin at the point $e_\pi$ and the $x$-axis aligned with $L_G$ (see Fig. \ref{fig:capturecondition}). The positive $y$-axis points towards the region containing the evader. If $e_\pi$ is located to the right side of the DDR then the positive $x$-axis is pointing right, otherwise the positive $x$-axis is pointing left. The frame $R_L$ is created at the beginning of each evader's turn taking the projection of the initial position of the evader as its origin. We use this frame to describe the player's motion. 

\begin{definition}[Positive projection]
The evader has a {\em positive projection} if after its turn it has a new projection $e'_\pi$ located on the positive side of the $x$-axis in $R_L$.
\end{definition}

\begin{definition}[Negative projection]
The evader has a {\em negative projection} if after its turn it has a new projection $e'_\pi$ located at the origin or on the negative side of the $x$-axis in $R_L$.
\end{definition}

\subsection{Notions of progress}

We introduce two notions of progress in our game. Those concepts will be used to prove that the DDR captures the evader in finite time.

\begin{definition}[Vertical progress]
The DDR makes {\em vertical progress} if it can guard a new line $L'_G$ which is parallel to $L_G$ and closer to the evader. 
\end{definition}

When the DDR makes vertical progress, it reduces the  size of the region $Q_2$ containing the evader.
\begin{definition}[Horizontal progress]
The DDR makes {\em horizontal progress} if it increases its $x$ coordinate in $R_L$ and guards $L_G$.
\end{definition}

When the DDR makes horizontal progress, it pushes the evader towards $\partial Q$. Note that in the previous definition it is assumed that at the beginning of the DDR's turn the evader has a positive projection.

\begin{figure}[t]
\centering
\includegraphics[scale=0.35]{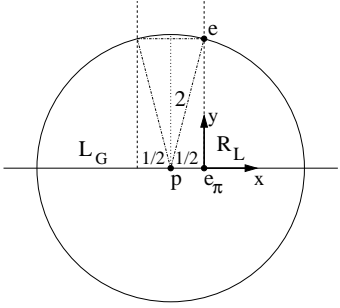}
\caption{We define a local reference frame $R_L$ at $e_\pi$. In this case, the $x$-axis is pointing right. \label{fig:capturecondition}}
\end{figure}

\section{THE CAPTURE STRATEGY}
\label{sec:estrategy}

In this section, we introduce the strategy for capturing the evader.

We divide the strategy into two stages.
In the first one, the DDR moves to a location on a longest chord of $Q$, and it establishes Invariants \ref{inv1} and \ref{inv2}. After that  both invariants are maintained throughout the game. In the second stage, the DDR makes vertical progress reducing the subregion containing the evader until the capture is achieved (see Fig. \ref{fig:figpol1}). For the first stage, in Lemma \ref{lm:initial} we show that once the DDR is located on a longest chord of $Q$, it can establish Invariants \ref{inv1} and \ref{inv2} in a finite number of turns. For the second stage, in Lemma \ref{lm:moveup} we prove that the DDR makes vertical progress if after the evader's turn one of the following conditions holds: 1)~the evader has a negative projection or 2)~the evader has a positive projection and the distance between its previous and current projections on $L_G$ is less than a threshold value. If the previous conditions do not hold then the DDR makes horizontal progress. In that case, in Lemma \ref{lm:horizontalprogress} we prove that after an upper bounded number of steps the evader hits $\partial Q$ and the DDR can make vertical progress. In Theorem \ref{tm:capture}, we show that using this strategy the evader is captured in an upper bounded time.

\begin{figure}[t]
\centering
\includegraphics[scale=0.35]{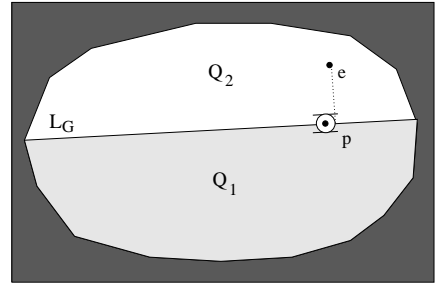}
\caption{Let $Q$ be the environment obtained from mapping the workspace $W$ into the evader's configuration space. The DDR at point $p\in Q$ guards the line segment $L_G\in Q$ by maintaining a distance $\frac{1}{2}$ to the projection $e_\pi$ of the evader's position $e$ on the line $L_G$. $L_G$ divides $Q$ into two subregions $Q_1$ (light gray) and $Q_2$ (white). In this case, $e\in Q_2$. If the evader crosses the line $L_G$ it will be captured by the DDR. The DDR moves the line $L_G$ towards the evader's position $e$ at some stages of the game, increasing the area of the subregion $Q_1$, until capture is attained. \label{fig:figpol1}}
\end{figure}

\subsection{Guarding a line segment}
\label{sec:guarding}

Let $L$ be a longest chord of $Q$. 
In this subsection, we first prove that the DDR can establish Invariants $\ref{inv1}$ and $\ref{inv2}$ on $L$.

\begin{lemma}
\label{lm:initial}
Let $L$ be a longest chord of $Q$. 
The DDR can establish Invariants \ref{inv1} and \ref{inv2} on $L$ in $\lceil diam(Q) \rceil$  steps. 
Afterwards, the DDR can maintain both invariants indefinitely by following the projection $e_\pi$ of $e$ on $L$.
\end{lemma}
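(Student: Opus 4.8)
The plan is to treat the two assertions of the lemma—\emph{establishment} of Invariants \ref{inv1} and \ref{inv2}, and their \emph{maintenance}—separately, working throughout in the frame in which $L$ is the $x$-axis and the evader lies in the half-plane $y>0$. Two elementary facts do most of the work. First, the map sending $e$ to its projection $e_\pi$ onto the segment $L$ is the nearest-point map onto a convex set, hence $1$-Lipschitz: if the evader travels at most one unit in a turn, $e_\pi$ moves at most one unit along $L$. Second, since $Q$ is convex, $diam(Q)$ bounds the distance between \emph{any} two points of $Q$, in particular the distance from the DDR's starting position to any point of $L$.

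I would dispatch maintenance first, since it is the clean half. Suppose at the start of the evader's turn the DDR sits at $p\in L$ with heading parallel to $L$ and with signed offset $p-e_\pi=\tfrac12$. After the evader moves, the new projection $e'_\pi$ satisfies $|e'_\pi-e_\pi|\le 1$ by the Lipschitz property. The DDR then sets $\omega=0$ and translates along $L$ by exactly $e'_\pi-e_\pi$, restoring $p'-e'_\pi=\tfrac12$. This displacement has magnitude at most one unit, so it fits in a single step; reversing is permitted ($v$ may be negative), and since $\omega=0$ the constraint in Eq.~(\ref{eq:vwddr}) holds for every $|v|\le v^{\max}=1$ and the heading stays parallel to $L$. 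Thus both invariants are re-established within the turn, and by induction the DDR follows $e_\pi$ indefinitely, which is precisely the ``following the projection'' claim.

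For establishment I would have the DDR drive at full speed toward the current foot point $e_\pi$ on $L$, curving its approach so that it arrives \emph{tangent} to $L$ (heading parallel), and choosing the side on which it meets $L$ so that the terminal offset can be set to $\tfrac12$. The natural progress measure is the DDR's height $h=d(p,L)$, together with its offset from $e_\pi$. The step bound should then follow from confinement: $e_\pi$ is trapped in the segment $L$ of length exactly $diam(Q)$, while the DDR advances one unit per turn, so the total travel needed to land on $L$ at the prescribed offset is at most $diam(Q)$, giving $\lceil diam(Q)\rceil$ steps. Once on $L$ with the heading aligned and the offset at $\tfrac12$, the maintenance argument takes over.

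The main obstacle is making the establishment bound \emph{tight} and simultaneous. Because the evader can slide $e_\pi$ along $L$ at unit speed (it need only move parallel to $L$), a DDR that is still off $L$ has strictly less speed available in the $L$-direction and cannot, by a naive distance potential, be guaranteed to close the horizontal gap—the evader can stall descent by fleeing along $L$. The crux is therefore to show that this stalling is limited by the finite length $diam(Q)$ of $L$ (the fleeing projection must eventually reach an endpoint), and to combine the descent in $h$, the horizontal pursuit of $e_\pi$, and the heading alignment into a single schedule that all completes within $\lceil diam(Q)\rceil$ turns rather than accumulating separate additive costs for turning and for gap-closing. I expect the bulk of the work, and the only genuinely delicate part, to lie in this coordinated bound; the Lipschitz projection property and the convex diameter estimate are the tools I would lean on to control it.
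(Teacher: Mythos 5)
Your maintenance half is correct and is essentially the paper's own argument: the nearest-point projection onto the chord is $1$-Lipschitz, so $e_\pi$ drifts at most one unit per turn, and a pure translation along $L$ (with $\omega=0$, so Eq.~(\ref{eq:vwddr}) is trivially satisfied and the heading stays parallel) restores the offset $\tfrac12$ each turn.

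The establishment half, however, has a genuine gap, and you name it yourself: you never actually produce the ``coordinated schedule'' that simultaneously closes the height $h=d(p,L)$, chases $e_\pi$ horizontally, and aligns the heading within $\lceil diam(Q)\rceil$ turns --- you only state that you expect this to be the bulk of the work. As written, the curved tangent approach is a plan, not a proof, and the difficulty you flag (a DDR off $L$ has strictly less speed available along $L$ than the fleeing projection) is real. The paper avoids this entirely by proving a weaker-looking but sufficient statement: its proof \emph{begins} with the DDR already on $L$ with heading parallel to $L$ (reaching $L$ and aligning is folded into the first stage of the overall strategy and is not charged against the $\lceil diam(Q)\rceil$ bound). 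From that configuration the chase is one-dimensional: the rule is to take a full unit step toward $e'_\pi$ whenever $d(p,e'_\pi)>\tfrac32$ and otherwise stop at distance $\tfrac12$ from it; since the projection is confined to the longest chord and the DDR sits behind it on that same chord, a projection that keeps fleeing hits an endpoint of $L$ before the DDR does, so the DDR catches up after at most $\lceil diam(Q)\rceil$ full steps. Your confinement observation is exactly the right ingredient for this one-dimensional argument; the fix is to drop the off-chord approach from the lemma (or state it as a separate, prior step) rather than to solve the harder coupled problem you set up.
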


\begin{proof}
Suppose the DDR is located at an arbitrary point $p$ on $L$ and it has aligned its heading with $L$ (Invariant \ref{inv1}). 
Let $e$ and $e'$ be the evader's position before and after its turn, respectively. 
We know that $d(e,e')\leq 1$. 
Since $Q$ is convex, $e_\pi$ and $e'_\pi$ are the perpendicular projections of $e$ and $e'$ on to $L$, thus $d(e_\pi,e'_\pi)\leq d(e,e')$. 
The DDR establishes the invariants as follows. In each turn, the DDR  moves from point $p$ to point $p'$, both on $L$, such that $d(p,p')\leq 1$. The point $p'$ is chosen according to the following rule: 
If $d(p,e'_\pi)>\frac{3}{2}$, the DDR moves to the point $p'$ on $L$ such that $d(p,p')=1$ and $d(p',e'_\pi)<d(p,e'_\pi)$. Otherwise, the DDR moves to the point $p'$ such that $d(p',e'_\pi)=\frac{1}{2}$ and $d(p,p')\leq 1$.   

Note that the DDR either takes a full step or catches up with $e'_\pi$. 
Further, if the evader keeps moving in one direction pushing its projection away from the DDR and forcing it to take a full step, it will hit the boundary first since the DDR is on the longest chord and behind the projection. Therefore, the projection of the evader must cross the DDR before the DDR hits the boundary. At this point, the DDR can position itself half a step behind the projection.
Therefore, following this strategy the DDR will position itself on a point $p$ on $L$ such that $d(p,e_\pi)=\frac{1}{2}$ in at most $\lceil diam(Q) \rceil$ steps. 
\end{proof}

While the DDR is establishing Invariant \ref{inv2}, the evader can cross $L$ multiple times. This is not a problem: Since $Q$ is convex,  every point in $Q$ has a unique projection on the longest chord. Furthermore, the projection moves continuously. Thus both invariants can be established regardless of which side the evader lies. Once the invariants are established on any line segment $L_G$, they can be easily maintained. This is because the evader's projection moves by at most one unit which allows the pursuer to keep up with the motion of the projection.

\begin{corollary}
Let $L_G$ be any chord of $Q$ and suppose that the DDR established Invariants \ref{inv1} and \ref{inv2} on $L_G$. 
The DDR can maintain both invariants indefinitely.
\end{corollary}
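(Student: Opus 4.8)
The plan is to prove the corollary by reducing the maintenance problem to the ``catch up with the projection'' argument already established in Lemma \ref{lm:initial}. The corollary strengthens the last sentence of that lemma: Lemma \ref{lm:initial} asserts maintenance only for the \emph{longest} chord $L$, whereas here $L_G$ is an \emph{arbitrary} chord on which the invariants are assumed to already hold. First I would observe that the only place the ``longest chord'' hypothesis was used in Lemma \ref{lm:initial} was to \emph{establish} Invariant \ref{inv2} from an arbitrary starting point (the argument that the projection must cross the DDR before the DDR reaches the boundary). Since the corollary \emph{assumes} the invariants are already established, that part of the argument is no longer needed, and the remaining maintenance step requires no assumption about which chord we are on.

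The key steps, in order, are as follows. Suppose at the beginning of the evader's turn both invariants hold on $L_G$: the DDR's heading is parallel to $L_G$ (Invariant \ref{inv1}) and the DDR sits at $p\in L_G$ with $d(p,e_\pi)=\tfrac12$ (Invariant \ref{inv2}). Let $e$ and $e'$ be the evader's positions before and after its turn. I would invoke $d(e,e')\le 1$ (the evader moves at most a unit distance in one time-step) together with the convexity of $Q$, which guarantees that $e_\pi$ and $e'_\pi$ are the perpendicular projections onto $L_G$ and hence are non-expansive: $d(e_\pi,e'_\pi)\le d(e,e')\le 1$. Thus the projection moves by at most one unit along $L_G$. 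Now the DDR must reposition itself from $p$ to a point $p'\in L_G$ with $d(p',e'_\pi)=\tfrac12$; I would bound the required displacement by the triangle inequality,
\begin{equation*}
d(p,p') \le d(p,e_\pi) + d(e_\pi,e'_\pi) + d(e'_\pi,p') = \tfrac12 + d(e_\pi,e'_\pi) + \tfrac12 \le 2.
\end{equation*}
This naive bound of $2$ exceeds the DDR's unit step budget, so the triangle inequality as written is too loose and must be sharpened.

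The main obstacle, then, is showing that the genuinely required step is at most one unit, not two. The fix is to note that the DDR need only cover the \emph{signed} displacement of the projection along $L_G$: since the DDR stays exactly half a unit behind the projection on the same line, the target point $p'$ differs from $p$ by exactly the displacement $e'_\pi - e_\pi$ of the projection (measured along $L_G$), so $d(p,p') = d(e_\pi,e'_\pi) \le 1$. This falls within the DDR's unit translational budget for one time-step. It remains to check the angular constraint: by Invariant \ref{inv1} the heading is already parallel to $L_G$, so the DDR moves \emph{along} its current heading with $v=\pm 1$ and $\omega=0$, which trivially satisfies Eq. (\ref{eq:vwddr}) and restores Invariant \ref{inv1} automatically. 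Hence after the DDR's turn both invariants are restored, and since $L_G$ was arbitrary and no property beyond convexity and the unit-speed bound was used, this argument repeats every turn, giving indefinite maintenance.
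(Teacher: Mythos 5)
Your proposal is correct and follows essentially the same route as the paper, which justifies the corollary informally in the paragraph preceding it: by convexity the projection onto $L_G$ is unique and non-expansive, so $e_\pi$ moves by at most one unit per turn and the DDR, already aligned with $L_G$, can keep up by translating along the chord. Your added details (preserving the offset $p - e_\pi$ so that $d(p,p') = d(e_\pi,e'_\pi)\le 1$, and noting that $\omega = 0$ satisfies Eq. (\ref{eq:vwddr}) and preserves Invariant \ref{inv1}) simply make explicit what the paper leaves implicit.
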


The goal of establishing the invariants is trapping the evader in one of the two regions defined by longest chord in $Q$. This region is the one which contains the evader when the invariants are established and is arbitrary.

We now prove that by maintaining Invariants $\ref{inv1}$ and $\ref{inv2}$, the pursuer can prevent the evader from crossing the line segment $L_G$ . This introduces a notion of guarding the line segment which is an important component of the pursuer strategy.

\begin{lemma}
\label{lm:relocation}
After Invariants \ref{inv1} and \ref{inv2} are established, the evader cannot cross the line $L_G$: if the evader moves to a new position $e'$ such that $d(e',e'_\pi) < \frac{\sqrt{15}}{2}$ then the DDR captures the evader in one turn.
\end{lemma}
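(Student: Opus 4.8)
The plan is to work entirely in the local frame $R_L$ and reduce the DDR's response to a pure translation along $L_G$, exploiting the fact that its heading is already aligned with $L_G$. First I would fix coordinates: by Invariants \ref{inv1} and \ref{inv2} together with the sign convention defining $R_L$, the DDR begins the turn at $p=(-\frac{1}{2},0)$ with heading along the $x$-axis, while the evader begins at $e=(0,y_e)$, so that its projection $e_\pi$ is the origin. After the evader's unit step it sits at $e'=(x',y')$ with $d(e,e')\le 1$; hence $|x'|\le 1$, and by hypothesis the perpendicular distance satisfies $|y'|=d(e',e'_\pi)<\frac{\sqrt{15}}{2}$.

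The key observation is that, because the heading is parallel to $L_G$ (Invariant \ref{inv1}), the DDR can reach any point of $L_G$ within distance $1$ of $p$ by setting $\omega=0$ and choosing $|v|\le 1$; no turning is required, so the angular constraint (\ref{eq:vwddr}) is never binding. Thus the DDR can move to any point $(x'_p,0)$ with $x'_p\in[-\frac{3}{2},\frac{1}{2}]$. I would then let the DDR move to the point of $L_G$ nearest to $e'$ lying in this reachable interval and verify $d(p',e')\le r_c=2$ in two cases. If $x'\le \frac{1}{2}$, it stands directly beneath $e'$ at $(x',0)$ (reachable since $-1\le x'\le \frac{1}{2}$), giving $d(p',e')=|y'|<\frac{\sqrt{15}}{2}<2$. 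If $x'\in(\frac{1}{2},1]$, it advances as far as possible to $(\frac{1}{2},0)$; then $d(p',e')^2=(x'-\frac{1}{2})^2+y'^2\le \frac{1}{4}+y'^2<\frac{1}{4}+\frac{15}{4}=4$, so again $d(p',e')<2$. In both cases $e'$ lies inside the capture region and the evader is caught in one turn.

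The only delicate point is the asymmetry of the DDR's reach: by the sign convention it always starts offset by $\frac{1}{2}$ on the side away from $e_\pi$, so it can slide a full unit toward $e_\pi$ but only to $x'_p=\frac{1}{2}$ in the opposite direction. When the evader sprints in the $+x$ direction it can therefore prevent the DDR from placing itself directly beneath it, and this is exactly where the capture radius $r_c=2$ does the work and where the constant $\frac{\sqrt{15}}{2}$ originates: it equals $\sqrt{r_c^2-(\tfrac{1}{2})^2}=\sqrt{4-\tfrac{1}{4}}$, the largest perpendicular distance still capturable from $(\frac{1}{2},0)$ when the residual horizontal gap is $\frac{1}{2}$ (the extreme case $x'=1$). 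Confirming that this threshold is tight is the main thing to check in the computation. Finally, I note the bound depends only on $|y'|$, so the argument is insensitive to the sign of $y'$ and covers the case in which the evader attempts to cross $L_G$; hence $L_G$ cannot be crossed, which is precisely the guarding claim.
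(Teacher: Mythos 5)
Your proof is correct and follows essentially the same route as the paper's: both reduce to the Pythagorean bound $\sqrt{(1/2)^2 + d(e',e'_\pi)^2} < \sqrt{1/4 + 15/4} = 2$, with the threshold $\frac{\sqrt{15}}{2} = \sqrt{r_c^2 - (1/2)^2}$ arising from the worst-case residual horizontal gap of $\frac{1}{2}$ between the DDR and the evader's new projection. Your version merely makes the DDR's reachable interval on $L_G$ and the two-case analysis explicit, whereas the paper states the DDR simply moves to the point at distance $\frac{1}{2}$ from $e'_\pi$; the substance is identical.
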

\begin{proof}
Let $p$ be the position of the DDR on $L_G$. As Invariant \ref{inv2} has been established the evader is located on a point $e$ such that $d(p,e_\pi)=\frac{1}{2}$. 
Consider the point $e^*$  on the boundary of the capture region (see Fig. \ref{fig:evadercross}). Since $Q$ is convex the segment $\overline{e^*e_\pi}$ is the shortest path from $e^*$ to $L_G$. The length of this segment is $d(e^*, e_\pi)=\sqrt{2^2-(1/2)^2}=\sqrt{15}/2$. 
 Therefore, for all positions $e$ located outside the capture region and  such that $d(p,e_\pi)=\frac{1}{2}$ the distance $d(e,e_\pi)\geq \sqrt{15}/2 > 1$, thus the evader cannot reach the line $L_G$ without entering the capture region. 
Note that as $\overline{ee_\pi}$ is the shortest path from $e$ to $e_\pi$ on $L_G$ then any other trajectory between those points is longer than $d(e,e_\pi)$ and it cannot be traveled in one turn. 
If, instead of crossing the line, the evader moves closer to it without crossing, it still gets captured: For any evader move to $e'$ such that $d(e',e'_\pi) < \sqrt{15}/2$, the DDR can move to location $p'$ such that $d(p',e'_\pi)=\frac{1}{2}$ and capture the evader since $d(p',e')=\sqrt{(1/2)^2+d(e',e'_\pi)^2} < 2$.
\end{proof}

\begin{figure}[t]
\centering
\includegraphics[scale=0.35]{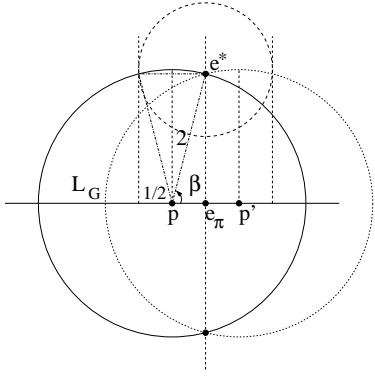}
\caption{The evader is located at point $e^*$ on the boundary of the capture region. The dashed circle shows all evader's locations where $d(e,e')=1$. The DDR can move from point $p$ to point $p'$ where $d(p,p')\leq 1$ in one turn.
\label{fig:evadercross}}
\end{figure}

\subsection{Making progress}

In the previous section, we showed that the DDR can establish Invariants \ref{inv1} and \ref{inv2}  on the longest chord and guard any chord once the invariants have been established. Of course, simply guarding does not suffice for capture. 
In this subsection, we present a strategy to guard a new line parallel  to $L_G$ after Invariants \ref{inv1} and \ref{inv2} are established. We also establish lower bounds for the horizontal and vertical progress made by the DDR at each turn. Those bounds are used to prove that the DDR captures the evader in an upper bounded time.

Suppose the evader moves from point $e$ to point $e'$ such that $d(e,e')\leq 1$ and its new projection $e'_\pi$ is negative. The DDR makes vertical progress by moving to point $p'$ in the line perpendicular to $L_G$ having the point $p$ as the intersection of both lines, and $d(p',e')<d(p,e)$ (see Fig. \ref{fig:bordermove}). To reach $p'$ we propose a strategy consisting of four motions that need to be performed in one turn. First, the DDR rotates in place an angle $\alpha$ at point $p$. After that, the DDR translates a distance $d(p,p_t)$ from point $p$ to point $p_t$ (see Fig. \ref{fig:bordermove}). Next, the DDR rotates in place an angle $-\alpha$ at point $p_t$. Finally, the DDR translates a distance $d(p,p_t) \cos \alpha$ from point $p_t$ to point $p'$. We call this set of motions the {\em zig-zag strategy} (see Fig. \ref{fig:bordermove}). Note that after applying the zig-zag strategy the DDR has established Invariant \ref{inv1}. 

If $d(e_\pi,e'_\pi)=0$ or $d(e_\pi,e'_\pi)=1$ then at point $p'$ the DDR has also established Invariant \ref{inv2}. If $d(e_\pi,e'_\pi)$ is different from the previous values then the DDR has to perform an additional translation to establish Invariant \ref{inv2} (see Fig. \ref{fig:bordermove}). In this case, it has to move to a point $p''$ such that $d(p'', e'_\pi)=\frac{1}{2}$. To reach the point $p''$ the DDR translates a distance $\frac{1}{2}-d(p',e'_\pi)$.

If the evader has a positive projection and $d(e_\pi, e'_\pi)<1$ then the DDR can also make vertical progress by applying the zig-zag strategy reestablishing Invariant \ref{inv1}. In this case, it has to move to a point $p''$ such that $d(p'', e'_\pi)=\frac{1}{2}$. To reach the point $p''$ the DDR translates a distance $d(e_\pi, e'_\pi)$ (see Fig. \ref{fig:evadercase1}). Note that if $d(e_\pi, e'_\pi) = 1$ then the DDR can only follow the evader's projection in order to maintain Invariants \ref{inv1} and \ref{inv2}.
\begin{figure}
\centering
\includegraphics[scale=0.45]{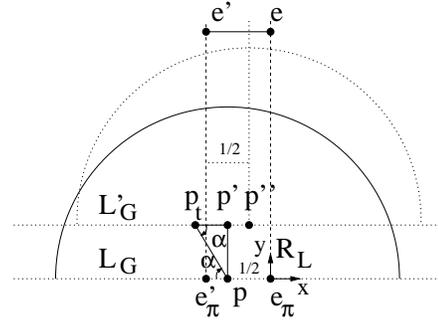}
\caption{Making vertical progress. The evader has a negative projection. The DDR makes vertical progress applying the zig-zag strategy. \label{fig:bordermove}}
\end{figure}
\begin{figure}
\centering
\includegraphics[scale=0.45]{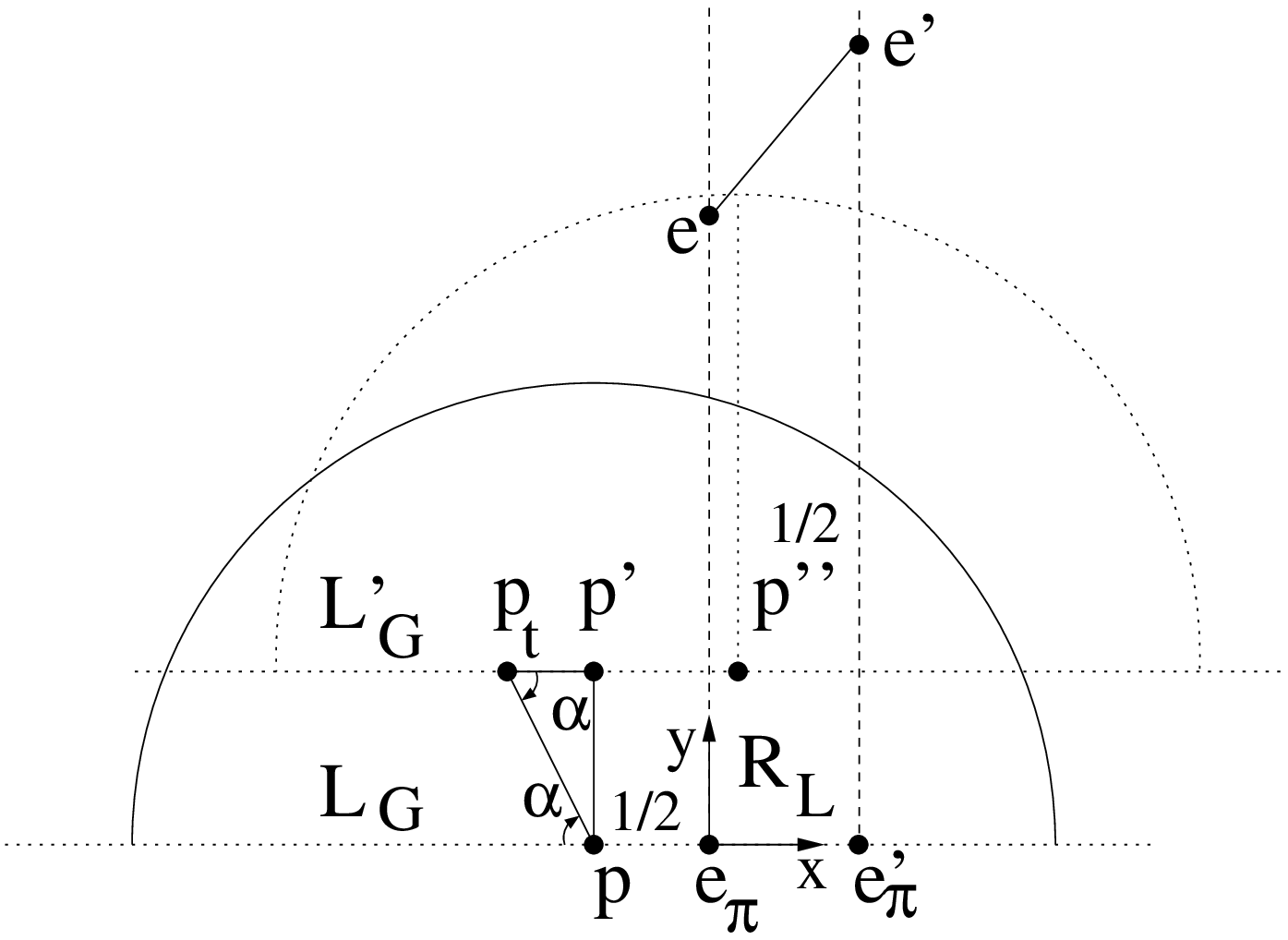}
\caption{Making horizontal progress. The evader has a positive projection. The DDR makes vertical progress using the zig-zag strategy if $d(e_\pi,e'_\pi) < 0.056$ otherwise it makes horizontal progress. \label{fig:evadercase1}}
\end{figure}
\begin{definition}[Bounding vertical progress]
We denote as $k_v$ the lower bound for the vertical progress made by the DDR when it applies the zig-zag strategy during its turn.
\end{definition}
\begin{definition}[Bounding horizontal progress]
We denote as $k_h$ the lower bound for the horizontal progress made by the DDR when it follows a positive projection of the evader.
\end{definition}

\begin{lemma}
\label{lm:moveup}
The DDR makes vertical progress of at least $k_v=0.0156$ using the zig-zag strategy if during the evader's turn one of the following conditions holds: 1) the evader has a negative projection or 2) the evader has a positive projection and the distance between its previous and current projections on $L_G$ is less than $k_h=0.056$.
\end{lemma}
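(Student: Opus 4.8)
The plan is to work entirely in the local frame $R_L$, placing the guarded line $L_G$ along the $x$-axis with the DDR at $p=(-\tfrac12,0)$, so that Invariant~\ref{inv2} reads $e_\pi=(0,0)$, and to reduce the lemma to a single constrained optimization over the parameters of the zig-zag. First I would record what the four sub-motions achieve geometrically. Starting with heading $0$, an in-place rotation by $\alpha$, a straight translation of length $\ell=d(p,p_t)$, an in-place rotation by $-\alpha$, and a translation of length $\ell\cos\alpha$ bring the robot to $p'=(-\tfrac12,\ell\sin\alpha)$ with heading again parallel to $L_G$. Hence the displacement toward the evader, i.e. the vertical progress, equals exactly $\ell\sin\alpha$, and Invariant~\ref{inv1} is automatically re-established.

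Next I would convert each sub-motion into a time cost using Eq.~(\ref{eq:vwddr}). Rotating in place forces $v=0$, so the turn rate is at most $1/R$ and a rotation by $\alpha$ costs time $\alpha R$; a straight translation forces $\omega=0$ at speed $v^{\max}=1$ and costs its own length. With the normalization of Section~\ref{sec:preliminaries} ($v^{\max}=1$, $r=1$, $R=1$) the whole zig-zag therefore consumes $2\alpha+\ell(1+\cos\alpha)$ of the unit-length turn.

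The subtle point, and the step I expect to be the real obstacle, is that re-establishing Invariant~\ref{inv2} on the new line $L'_G$ can itself consume part of the turn, so the budget left for the zig-zag is not the full unit. After the zig-zag the robot sits at $x=-\tfrac12$ while the new projection has abscissa $x_{e'}=\pm d(e_\pi,e'_\pi)$, so an extra horizontal translation of length $\bigl|\,|x_{e'}+\tfrac12|-\tfrac12\,\bigr|$ restores $d(p'',e'_\pi)=\tfrac12$. I would then bound this correction: for a negative projection ($x_{e'}\in[-1,0]$) it is largest, equal to $\tfrac12$, precisely when $d(e_\pi,e'_\pi)=\tfrac12$; for a positive projection the hypothesis $d(e_\pi,e'_\pi)<k_h=0.056$ keeps it below $0.056$. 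Thus in every case admitted by the lemma the time available for the zig-zag is at least $B=\tfrac12$.

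Finally I would solve the optimization. Substituting $\ell=(B-2\alpha)/(1+\cos\alpha)$ into the budget equality, the vertical progress becomes $(B-2\alpha)\tan(\alpha/2)$, which the DDR maximizes over $\alpha$. Since this is increasing in $B$, the worst case is $B=\tfrac12$; differentiating $(\tfrac12-2\alpha)\tan(\alpha/2)$ and setting the derivative to zero reduces, after the half-angle identity, to the transcendental equation $\alpha+\sin\alpha=\tfrac14$, whose root $\alpha^{*}\approx0.125$ yields $\ell^{*}\sin\alpha^{*}\approx0.0156$. Committing the DDR to this fixed pair $(\alpha^{*},\ell^{*})$ whenever $B\ge\tfrac12$ then guarantees vertical progress of at least $k_v=0.0156$ in both cases of the statement. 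The only remaining care is to confirm that the evader, which to avoid immediate capture stays outside the capture region and hence at perpendicular distance at least $\sqrt{15}/2$ from $L_G$, remains strictly on the far side of $L'_G$, so that $L'_G$ is indeed a guarded line closer to the evader.
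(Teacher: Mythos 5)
Your proposal is correct and follows essentially the same argument as the paper: decompose the unit turn into the zig-zag time $2\alpha+\ell(1+\cos\alpha)$ plus the horizontal correction needed to restore Invariant~\ref{inv2}, observe that the worst-case budget left for the zig-zag is $\tfrac12$ (attained for a negative projection with $d(e_\pi,e'_\pi)=\tfrac12$), and maximize $(\tfrac12-2\alpha)\tan(\alpha/2)$ over $\alpha$, whose critical equation you correctly simplify to $\alpha+\sin\alpha=\tfrac14$, recovering $\alpha^*\approx 0.1251$ and $k_v\approx 0.0156$. The only (harmless) packaging difference is that you handle the positive-projection case by the same uniform budget bound $B\ge\tfrac12$ rather than re-deriving the separate expression $(1-d(e_\pi,e'_\pi)-2\alpha)\tan(\alpha/2)$ as the paper does to motivate the choice of $k_h$.
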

\begin{proof}
Suppose the evader has a negative projection. Using the zig-zag strategy the DDR reaches a new position $p'$ on a parallel line to $L_G$ and it also reestablishes Invariant \ref{inv1} (see Fig. \ref{fig:bordermove}). In order to maintain Invariant \ref{inv2}, the DDR also needs to translate a distance $\frac{1}{2}-d(p',e'_\pi)$.
The time to perform the strategy described before is given by the following expression
\begin{equation}
\label{eq:t_s}
t_s=\frac{2\alpha}{v^{\max}}+\frac{d(p_t,p)(1+\cos \alpha)}{v^{\max}}+\frac{\left(\frac{1}{2}-d(p',e'_\pi)\right)}{v^{\max}}
\end{equation}
Note that in order to satisfy the inverse relation between the rotational and translational velocities in Eq. (\ref{eq:vwddr}), we assume that the rotational and translational motions are performed separately at maximal speed in Eq. (\ref{eq:t_s}), thus each motion requires an independent amount of time.
Since the DDR must perform the strategy in one turn 
$t_s=\Delta t=1$.
Substituting the last equation into Eq. (\ref{eq:t_s}), we obtain that
\begin{equation}
\label{eq:d(ptp)}
d(p,p_t)=\frac{\frac{1}{2}+d(p',e'_\pi)-2\alpha}{1+\cos\alpha}
\end{equation}
From the right triangle $\triangle pp_tp'$ in Fig. \ref{fig:bordermove}, we have that $d(p,p')=d(p,p_t)\sin \alpha$.
Substituting Eq. (\ref{eq:d(ptp)}) into the last expression, and recalling that $\tan (\frac{\alpha}{2})=\frac{\sin \alpha}{1+\cos \alpha}$ we obtain an expression for the distance that the DDR can move perpendicular to $L_G$ and allows the DDR to reestablish Invariants \ref{inv1} and \ref{inv2} 
\begin{equation}
\label{eq:updistance}
d(p,p')=\left(\frac{1}{2}+d(p',e'_\pi)-2\alpha\right)\tan\left(\frac{\alpha}{2}\right)
\end{equation}
Eq. (\ref{eq:updistance}) depends on the angle $\alpha$ that the DDR initially rotates and the distance between the DDR's position $p'$ and the projection $e'_\pi$, then to find the minimum vertical progress that the DDR can make we have to do the following. For each value of $d(p',e'_\pi)$ we have to find the value of $\alpha$ that maximizes Eq. (\ref{eq:updistance}), this give us a set of triplets $(d(p',e'_\pi), \alpha, d(p,p'))$ where each $d(p,p')$ is the maximum vertical progress made by the DDR for the corresponding value of $d(p',e'_\pi)$. The triplet having the smallest value of $d(p,p')$ corresponds to the minimum vertical progress made by the DDR when the evader has a negative projection. Note that $d(p',e'_\pi)\in[0,\frac{1}{2}]$ and $\alpha\in [0, \frac{\pi}{2}]$ since the DDR can rotate in both clockwise or counter-clockwise direction. As the value of $d(p',e'_\pi)$ decreases the time to establish Invariant \ref{inv2} increases, recall that the DDR has to translate a distance $\frac{1}{2}-d(p',e'_\pi)$, and since the time-step is fixed the time to perform the zig-zag strategy is reduced. It is not hard to see that the minimal vertical progress achieved by the zig-zag strategy when Eq. (\ref{eq:updistance}) is maximized corresponds to $d(p',e'_\pi)=0$ since this requires the maximum translation to establish Invariant \ref{inv2}. To find the value of $\alpha$ that maximizes Eq. (\ref{eq:updistance}) when $d(p',e'_\pi)=0$ we need to solve
\begin{equation}
\label{eq:firstpartial}
\begin{split}
\frac{\partial d(p,p')}{\partial \alpha}=\frac{\partial \left[ \left(\frac{1}{2}-2\alpha\right)\tan\left(\frac{\alpha}{2}\right)\right]}{\partial \alpha}=0\\
\end{split}
\end{equation}
and verify that 
\begin{equation}
\label{eq:secondpartial}
\frac{\partial^2 d(p,p')}{\partial^2 \alpha}=\frac{\partial^2 \left[ \left(\frac{1}{2}-2\alpha\right)\tan\left(\frac{\alpha}{2}\right)\right]}{\partial^2 \alpha} < 0
\end{equation}
From Eq. (\ref{eq:firstpartial}) we have that
\begin{equation}
\left(\frac{1}{4} - \alpha \right)\sec \left(\frac{\alpha}{2}\right)^2-2\tan \left( \frac{\alpha}{2} \right) = 0
\end{equation}
This nonlinear equation can be solved using a numerical software package. We have that $\alpha=0.1251$ is an approximate solution for this equation. From Eq. (\ref{eq:secondpartial}) we have that 
\begin{equation}
\label{eq:secondpartial2}
\frac{\partial^2 d(p,p')}{\partial^2 \alpha} = -2\sec\left( \frac{\alpha}{2} \right)^2+\left( \frac{1}{4} - \alpha \right)\sec \left( \frac{\alpha}{2}\right)^2 \tan \left( \frac{\alpha}{2} \right)
\end{equation}
Substituting $\alpha = 0.1251$ into Eq. (\ref{eq:secondpartial2}) we have that $\frac{\partial^2 d(p,p')}{\partial^2 \alpha}=-1.9999 < 0$, thus $\alpha=0.1251$ maximizes $d(p,p')$ when $d(p',e'_\pi)=0$. Substituting $\alpha=0.1251$ into Eq. (\ref{eq:updistance}) we obtain $d(p,p')=0.0156$. This value corresponds to the minimum vertical progress made by the DDR thus $k_v=0.0156$.

In an analogous way, if the evader has a positive projection but the projection is within a bound $k_h$, we found an expression for the distance that the DDR can move perpendicular to $L_G$ and allows the DDR to reestablish Invariants \ref{inv1} and \ref{inv2} (see Fig. \ref{fig:evadercase1}) 
\begin{equation}
\label{eq:updistance2}
d(p,p')=\left(1 - d(e_\pi,e'_\pi)-2\alpha\right)\tan\left(\frac{\alpha}{2}\right)
\end{equation}
Note that Eq. (\ref{eq:updistance2}) depends on the distance between the previous and current projections of the evader's position on $L_G$. In this case, we obtain the value when the horizontal progress made by the DDR simply following the projection $e'_\pi$ is the same than the vertical progress made by the DDR applying the zig-zag strategy. We use this value as the lower bound for the minimal horizontal progress $k_h$. To find it we do the following. For each value of $d(e_\pi,e'_\pi)$ we have to find the value of $\alpha$ that maximizes Eq. (\ref{eq:updistance2}), this give us a set of triplets $(d(e_\pi,e'_\pi), \alpha, d(p,p'))$ where each $d(p,p')$ is the maximum vertical progress made by the DDR for the corresponding value of $d(e_\pi,e'_\pi)$. Note that the value of $\alpha$ that maximizes Eq. (\ref{eq:updistance2}) can be computed as it was described in the first part of the proof. We select the triplet satisfying $d(p,p')=d(e_\pi, e'_\pi)$ and we set $k_h=d(e_\pi,e'_\pi)$. This value is $d(e_\pi,e'_\pi)=0.056$ for which the DDR performs a rotation $\alpha=0.2371$. We use $k_h$ as a {\em threshold}, if the evader has a positive projection  with $d(e_\pi,e'_\pi) < k_h$, the DDR applies the zig-zag strategy and makes vertical progress moving at least a perpendicular distance of $0.056$ to $L_G$, otherwise, it moves on $L_G$ to maintain Invariant \ref{inv2}, making horizontal progress by translating at least a distance of $0.056$.
\end{proof}
	
\begin{lemma}
\label{lm:horizontalprogress}
The DDR makes vertical progress after $O\left(\frac{diam(Q)}{k_h}\right)$ horizontal steps where $k_h=0.056$.
\end{lemma}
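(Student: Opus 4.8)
The plan is to run a monotone-progress (potential) argument on the position of the evader's projection along $L_G$. The resource consumed by horizontal steps is the \emph{horizontal extent} available on the chord $L_G$: since $L_G$ is a chord of $Q$ its length is at most $diam(Q)$, and I will show that each horizontal step advances the projection by at least $k_h$ in one fixed direction, so at most $O(diam(Q)/k_h)$ such steps can occur before the evader is cornered against $\partial Q$ and vertical progress is forced.

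First I would recall the bookkeeping set up for Lemma \ref{lm:moveup}: at the start of every evader turn Invariants \ref{inv1} and \ref{inv2} hold, so the DDR sits on $L_G$ at distance $\frac{1}{2}$ behind the current projection $e_\pi$, and $R_L$ places $e_\pi$ at the origin with the positive $x$-axis pointing from the DDR toward and beyond $e_\pi$. A horizontal step occurs exactly when the evader produces a positive projection with $d(e_\pi,e'_\pi)\ge k_h$; the DDR then follows the projection and advances its $x$-coordinate by $d(e_\pi,e'_\pi)\ge k_h=0.056$. The point I would stress is that this advance is monotone in a \emph{fixed global direction} along $L_G$: after following, the DDR is again $\frac{1}{2}$ behind the new projection on the same side, so the new frame's positive $x$-axis agrees with the previous one. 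Conversely, any move that fails to advance the projection positively by at least $k_h$ is, by definition, a negative projection or a sub-threshold positive projection, and both are handled by Lemma \ref{lm:moveup} to yield vertical progress. Hence a maximal run of consecutive horizontal steps is exactly a run in which $e_\pi$ marches monotonically toward one endpoint of $L_G$ by at least $k_h$ per step, without backtracking.

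Next I would bound the run. Because the projection is confined to $L_G$, whose length is at most $diam(Q)$, and it advances by at least $k_h$ on each horizontal step, at most $\lceil diam(Q)/k_h\rceil = O(diam(Q)/k_h)$ horizontal steps can occur before $e_\pi$ reaches the endpoint of $L_G$ lying on $\partial Q$. I would then establish the transition: once the projection is within $k_h$ of that endpoint, convexity guarantees the closest point of any evader position on the segment $L_G$ cannot lie beyond the endpoint, so the projection is clamped there; the evader's next move therefore produces either a negative projection or a positive projection with $d(e_\pi,e'_\pi)<k_h$, and by Lemma \ref{lm:moveup} the DDR makes vertical progress of at least $k_v$. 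This gives the claimed count. Interleaved vertical steps can only shorten the region $Q_2$ and replace $L_G$ by a parallel chord of length at most $diam(Q)$, so they never increase the horizontal budget and the same bound applies.

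The step I expect to be the main obstacle is making the cornering rigorous: precisely, showing that horizontal runs force strictly monotone positive motion of the projection, and that arrival at the endpoint of $L_G$ genuinely blocks any further threshold-exceeding positive projection. The direction-consistency claim rests on $R_L$ being re-created each turn with its orientation pinned by Invariant \ref{inv2} (the DDR staying half a step behind the projection), while the blocking claim rests on convexity of $Q$ forcing every projection onto the segment between the chord's endpoints. Both should be stated carefully so that the monotone count is not disturbed by the re-framing or by vertical steps occurring within the horizontal phase.
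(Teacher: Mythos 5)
Your proposal is correct and follows essentially the same argument as the paper: each above-threshold positive projection advances the evader's projection along $L_G$ by at least $k_h$, the chord has length at most $diam(Q)$, so after $O(diam(Q)/k_h)$ horizontal steps the projection is pinned at an endpoint and Lemma~\ref{lm:moveup} forces vertical progress. Your added care about the direction-consistency of $R_L$ and the clamping of the projection at the chord's endpoint only makes explicit what the paper's proof (which instead remarks that each guarded line is the longest segment of the current $Q_2$, so the DDR never runs out of chord before the evader does) leaves implicit.
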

\begin{proof}
Let the DDR guards a line segment $L_G$. Suppose that after each turn the evader has a positive projection and the distance between its previous and current projections on $L_G$ is at least $k_h$, otherwise, the DDR makes vertical progress by Lemma \ref{lm:moveup}. Since the length of $L_G$ is upper bounded by $diam(Q)$ then in at most $O(diam(Q)/k_h)$ steps the projection of the evader reaches the boundary of $L_G$. After that the evader cannot continue having a positive projection and the DDR makes vertical progress. Note that since $Q$ is convex, the DDR starts guarding a longest chord of $Q$, and every new line segment $L_G$ guarded by the DDR is parallel to the previous one then the line guarded by the DDR before applying the zig-zag strategy is the longest line segment in the subregion $Q_2$ containing the evader, therefore the DDR never reaches $\partial Q$ before the evader.
\end{proof}

\subsection{Capturing the evader}

In our main theorem, we prove that following the strategy described in previous subsections the DDR captures the evader in an upper bounded time.

\begin{theorem}
\label{tm:capture}
After Invariants \ref{inv1} and \ref{inv2} have been established, the DDR captures the evader in any convex environment in $O\left(\frac{diam(Q)^2}{k_hk_v}\right)$ steps where $k_v=0.0156$ and $k_h=0.056$.
\end{theorem}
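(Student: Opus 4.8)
The plan is to bound the total number of turns (after the invariants are established) by partitioning the game into phases, each terminating in a single vertical-progress turn, and then bounding both the number of such phases and the number of turns inside each phase. The two progress measures $k_v$ and $k_h$ from Lemmas \ref{lm:moveup} and \ref{lm:horizontalprogress} supply exactly the per-turn guarantees needed for this counting.

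First I would bound the number of vertical-progress turns. Each time the DDR makes vertical progress it begins guarding a new line parallel to $L_G$ but closer to the evader by a perpendicular distance of at least $k_v$ (Lemma \ref{lm:moveup}). Since the guarded line is always parallel to the initial longest chord and $Q$ is convex, the perpendicular extent of the subregion $Q_2$ containing the evader is at most $diam(Q)$ and strictly decreases by at least $k_v$ at each vertical-progress turn. Moreover, by Lemma \ref{lm:relocation} the evader must keep its perpendicular distance to the guarded line at least $\sqrt{15}/2$ to stay outside the capture region; once the perpendicular extent of $Q_2$ drops below $\sqrt{15}/2$ the evader can no longer avoid the capture disk and is caught. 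Hence the number of vertical-progress turns is at most $O\!\left(diam(Q)/k_v\right)$.

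Next I would bound the number of horizontal-progress turns occurring between two consecutive vertical-progress turns. While the DDR guards a fixed line $L_G$, Lemma \ref{lm:horizontalprogress} shows it can sustain at most $O\!\left(diam(Q)/k_h\right)$ horizontal steps: each such step advances the evader's projection along $L_G$ by at least $k_h$, and since the length of $L_G$ is at most $diam(Q)$, after $O\!\left(diam(Q)/k_h\right)$ steps the projection reaches the boundary of $L_G$, forcing the evader to lose its positive projection and the DDR to make vertical progress. Thus each phase (a maximal run of horizontal turns followed by one vertical turn) contains at most $O\!\left(diam(Q)/k_h\right)+1$ turns. Combining the two bounds, and noting that by Lemma \ref{lm:moveup} every turn is classified as either vertical-progress or horizontal-progress so that no turn is wasted, the number of phases equals the number of vertical-progress turns, at most $O\!\left(diam(Q)/k_v\right)$, giving a total of $O\!\left(diam(Q)/k_v\right)\cdot O\!\left(diam(Q)/k_h\right)=O\!\left(diam(Q)^2/(k_h k_v)\right)$ turns, as claimed.

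The step I expect to be the main obstacle is making the capture condition rigorous rather than the final multiplication: I must precisely tie the shrinking perpendicular extent of $Q_2$ to the safe-distance bound $\sqrt{15}/2$ of Lemma \ref{lm:relocation}, and I must invoke convexity of $Q$ to guarantee that each newly guarded parallel line remains the longest segment of the shrinking subregion containing the evader. This last point is what ensures the horizontal-step bound $O\!\left(diam(Q)/k_h\right)$ continues to apply on every line and that the DDR, being on the longest chord and behind the projection, never reaches $\partial Q$ before the evader. Once these two structural facts are established, the remainder is elementary bookkeeping of the vertical and horizontal progress measures.
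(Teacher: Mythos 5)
Your proposal is correct and follows essentially the same argument as the paper: bound the number of horizontal steps per guarded line by $O\left(\frac{diam(Q)}{k_h}\right)$ via Lemma \ref{lm:horizontalprogress}, bound the number of vertical-progress steps by $O\left(\frac{diam(Q)}{k_v}\right)$ via Lemma \ref{lm:moveup}, and multiply. Your added detail tying the capture condition to the $\frac{\sqrt{15}}{2}$ safe distance of Lemma \ref{lm:relocation} is a reasonable elaboration of what the paper compresses into ``trapping it between the boundary and $L_G$.''
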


\begin{proof}
From Lemma \ref{lm:horizontalprogress}, the DDR makes vertical progress in at most $O(diam(Q)/0.056)$ horizontal steps for any line segment $L_G$. 
From Lemma \ref{lm:moveup}, the DDR minimal vertical progress is $k_v=0.0156$, thus in at most $O(diam(Q)/0.0156)$ vertical steps the DDR captures the evader by trapping it between the boundary and $L_G$. The result follows.
\end{proof}

Throughout the paper, we assumed that the duration of each time-step is $1/v^{\max}$. If a different time-step $\epsilon/v^{\max}$ is chosen, the number of steps to capture the evader gets scaled by $1/\epsilon^2.$
\section{CONCLUSIONS AND FUTURE WORK}

In this paper, we studied the kinematic problem of capturing an omnidirectional evader using a Differential Drive Robot (DDR) in convex
environments. The DDR wins the game if at any time instant it collides with the evader. We proved that for any convex environment if both players have the same maximum speed then a single DDR can capture the evader in $O\left(\frac{diam(Q)^2}{k_vk_h}\right)$ steps where $k_v=0.0156$ and $k_h=0.056$. Our result is one of the few closed form solutions of a differential game in a geometric setting.  

One avenue for future research is to improve the capture time: From Alonso et al. \cite{ALONSO-92} we have that an omnidirectional lion captures the man in time $O(\frac{r}{s}\log\frac{r}{c})$, where $r$ is the radius of the circular arena, $c$ is the capture distance and $s$ is the maximum speed of the players. Hence there might be room for improvement in capture time. The second avenue for research is to investigate the class of environments in which the DDR wins the game.
We conjecture that the DDR lion can win the game in any simply-connected domain. 

In practice, following the strategy presented in this paper might be difficult for some systems with dynamic constrains. 
In particular, following the zig-zag strategy may require large acceleration. 
Still, the pursuit strategy presented in this paper can be modified for some practical applications. For example, if the robot can not follow the zig-zag trajectory in a single time step, the duration of a step can be increased. In this case, our algorithm would guarantee capture within distance traveled in a single time step. Also, the robot does not need to perform the exact zig-zag motion, it just needs to arrive at the same configuration, i.e., reach a position where it makes progress and maintains Invariants 1 and 2. An interesting extension could be finding a smoother trajectory to reach that point using an optimization technique. Additional alternatives are increasing the capture radius or models where the DDR is faster than the evader. Studying systems with general dynamics is an interesting and challenging avenue for future research.

\section{ACKNOWLEDGMENTS }

This material is based in part upon work supported by the 
National Science Foundation under Grant Numbers IIS-1317788, IIS-1111638  and IIS-0917676.
Any opinions, findings, and conclusions or recommendations 
expressed in this material are those of the author(s) and do not necessarily
 reflect the views of the National Science Foundation. Most of this work was done when Ubaldo Ruiz was a postdoctoral fellow in the Department of Computer Science and Engineering of the University of Minnesota. He was financially supported by Consejo Nacional de Ciencia y Tecnolog\'ia (CONACYT), M\'exico.

\end{document}